\documentclass[]{fairmeta}
\usepackage{amsthm}
\usepackage{amssymb}
\newtheorem{assumption}{Assumption}
\newtheorem{lemma}{Lemma}

\title{DUET — Dual User Embedding Transformers for Offsite Conversion Prediction}

\author[\dagger]{Reazul Hasan Russel}
\author[\dagger]{Mingwei Tang}
\author[\dagger]{Rostam Shirani}

\author[]{Xinlong Liu}
\author[]{Navid Madani}
\author[]{Leo Ding}
\author[]{Yawen He}
\author[]{Xiangyu Wang}
\author[]{Mustafa Acar}
\author[]{Ashish Katiyar}
\author[]{Yuhai Li}
\author[]{Alan Yang}
\author[]{Metarya Ruparel}
\author[]{Derek Qiang Xu}
\author[]{Rupert Wu}
\author[]{Rui Yang}
\author[]{Liang Tao}
\author[]{Xinyi Zhao}
\author[]{Larry Zhang}
\author[]{Sri Reddy}
\author[]{Rob Malkin}

\contribution[\dagger]{Equal contribution}
\contribution[]{AI at Meta}

 \abstract{Offsite conversion rate (OCVR) prediction is an important ranking problem in computational recommendation systems. This task presents a modeling challenge: click signals are abundant and exhibit short temporal horizons, whereas conversion signals are inherently sparse, long-delayed, and frequently unattributed. Despite these statistical disparities, both signal types must inform models that operate within strict serving-latency constraints. Prior pre-training approaches address this heterogeneity with a  single, undifferentiated encoder applied uniformly across both data streams. We propose DUET (Dual User Embedding Transformers), a framework that explicitly partitions user behavioral data into two domain-coherent streams — clicks and conversions — and pre-trains dedicated transformer encoders with architectures tailored to each stream's statistical characteristics: multi-layer self-attention for the dense click stream and interleaved cross- and self-attention for the sparse conversion stream. The resulting complementary embeddings are jointly consumed by a downstream ranker without exceeding serving-latency budgets. Evaluation demonstrates up to 0.38\% normalized entropy (NE) reduction relative to the strongest baseline, and A/B test shows consistent improvements in OCVR prediction accuracy.}

\correspondence{Reazul Hasan Russel at \email{rrussel@meta.com}}
\keywords{User Representation Learning, Sequence Modeling, Cross-Domain Learning, Event-Triggered Inference, Click-Through Rate (CTR), Conversion Rate (CVR), Recommender System}

\begin{document}

\maketitle

  \section{Introduction}
  \label{sec:intro}                                                                                        
  \emph{Offsite Conversion (OCVR) prediction} refers to the task of estimating the probability that a user completes a target action—such as a purchase or registration—on an external destination (website or application) after engaging with a recommended item served on the host platform. The importance of OCVR prediction is growing, driven by the expansion of retail media networks and the deprecation of third-party cookies, which together are shifting performance budgets toward platforms with first-party behavioral data. Offsite retail media is forecast to grow annually, making accurate OCVR prediction a differentiator for recommendation platforms. Yet the OCVR prediction task is difficult: positive labels are sparse (rates below 5\%), attribution windows are long and variable (hours to days), and a fraction of conversions remain unattributed depending on product segment and attribution method. Recommendation models must contend with all of these challenges under stringent online training and inference latency constraints, creating a persistent tension between prediction quality and serving efficiency.
                 
  A natural response to these constraints is to focus on richer user representations --- models that capture deeper behavioral patterns should yield better conversion predictions. Indeed, recent advances in transformer-based sequence modeling~\cite{kang2018sasrec, xiong2026latte}, self-supervised
  pre-training~\cite{ouyang2023contrastive, Shtoff_2023, Zhang2023ContrastiveLW}, and embedding methods~\cite{su2025multifacetedlargeembeddingtables, Wang_2024} have demonstrated quality gains in recommendation and ranking tasks. However, the computational cost of these expressive architectures makes direct deployment in the latency-critical serving path impractical. This motivates a decoupled design: an upstream model pre-trains rich user embeddings offline, which are then served asynchronously as static embedding features to the ranker. The separation allows expressive architectures to inform ranking models without violating latency budgets. Existing instantiations of this paradigm~\cite{tang2024afl} have validated the approach for click prediction, but they train a single upstream model on click and onsite conversion data, treating all behavioral signals as a homogeneous stream. This design introduces three limitations when the goal is OCVR prediction:
  \begin{enumerate}
      \item \textbf{Signal dominance under regime mismatch.} Click and onsite conversion data is dense with short attribution windows, while offsite conversion data is orders of magnitude sparser. A single model trained on both is dominated by the denser click signal, underrepresenting the conversion patterns most relevant to the downstream task.            
      \item \textbf{Architectural uniformity.} Applying a single encoder architecture to data streams with fundamentally different statistical properties ignores the possibility that dense click sequences and sparse conversion sequences benefit from different inductive biases --- an observation we validate empirically in Section~\ref{sec:architecture}.                                              
      \item \textbf{Shallow cross-domain transfer.} Knowledge transfer across applications has typically been limited to shared feature encoders~\cite{fu2024unifiedframeworkmultidomainctr, tian2023ufinuniversalfeatureinteraction} or domain-disentangled representations~\cite{Zhu_2023, Wang_2025}, without incorporating organic engagement or content-derived semantic signals that can enrich user representations.                                                                         
  \end{enumerate}                                                   

  We propose \textbf{DUET} (Dual User Embedding Transformers), a framework that aims to address these limitations through a single, unified principle: \emph{learn dedicated user embeddings from statistically coherent data streams, then compose them in the downstream ranker}. Concretely, DUET partitions upstream training data into two domain-coherent streams --- a click/onsite-conversion stream and an offsite-conversion stream --- and pre-trains a dedicated transformer encoder on each, with an architecture tailored to the stream's statistical properties. The resulting complementary embeddings, which we term \textbf{ClickAUE} (Click Attentive User Embedding) and \textbf{ConvAUE} (Conversion Attentive User Embedding), are jointly consumed by the downstream ranker for offsite OCVR prediction.                                                                                  
  The main contributions of this work are:                                                                 
  \begin{enumerate}                                                                                        
    \item \textbf{Domain-specialized dual embedding learning.} We introduce domain-aware data routing that partitions heterogeneous behavioral data into click and conversion streams, and pre-train a dedicated upstream encoder on each. We select attention architectures matched to each stream's regime: multi-layer self-attention (LLaTTE~\cite{xiong2026latte}) for the dense click data stream and interleaved cross-/self-attention for the sparse conversion data stream. Ablations confirm that each architecture choice is load-bearing for its respective stream (Section~\ref{sec:architecture}).
    \item \textbf{Multimodal, cross-application input enrichment.} Both upstream encoders consume event-based features (EBFs) spanning content interactions, organic feed engagement, and content-derived semantic IDs~\cite{roychowdhury2026coffeecodesignframeworkfeature} across multiple applications, enabling richer user representations.                           
    \item \textbf{Asynchronous serving with scalable infrastructure.} An event-triggered inference mechanism generates user embeddings asynchronously, decoupling upstream model complexity from the serving-latency budget. The deployment incorporates throttling, checkpoint validation, and embedding quantization, achieving negligible training QPS and serving latency overhead .
    \item \textbf{Empirical validation.} We evaluate DUET across $6$ downstream OCVR models, demonstrating meaningful metric improvements.         
  \end{enumerate}

\section{Related Work}
\label{sec:related}
  
We review prior work along three axes most relevant to our contributions.       

  \textbf{CVR Prediction} General conversion rate (CVR) prediction is challenged by label sparsity and sample selection bias. ESMM~\cite{ma2018esmm} jointly models CTR and CVR to estimate conversion rate over the full impression space, and subsequent work extends this with counterfactual multi-task objectives (ESCM$^2$)~\cite{Wang_2022}, doubly robust estimation~\cite{dai2022generalizeddoublyrobustlearning, li2023tdrcltargeteddoublyrobust}, and variational information exploitation~\cite{fei2025esvirec}. Multi-task architectures such as MMoE~\cite{ma2018mmoe} and PLE~\cite{zhou2023featuredecompositionreducingnegative} balance shared and task-specific representations within a single model. These approaches address sparsity through task decomposition or debiasing at the model level. Our work is complementary: we address sparsity upstream by routing data into domain-coherent streams and pre-training dedicated encoders whose embeddings are consumed by any downstream CVR architecture. 
                                                            
  The challenge of unattributed and offsite conversions --- where events cannot be deterministically matched to impressions due to cross-device fragmentation, cookie restrictions, or app tracking policies -- has grown over time. Entire-space counterfactual methods~\cite{Wang_2022} and doubly robust estimators~\cite{dai2022generalizeddoublyrobustlearning} partially address this, but remain limited when attribution rates are structurally low. Our framework incorporates unattributed conversion events directly into the ConvAUE training stream, using them as weak supervision to mitigate sparsity rather than discarding them.
  
  \textbf{Sequence Modeling for Recommendation}                                                        
  Modeling user behavior as ordered interaction sequences has driven advances in recommendation. SASRec~\cite{kang2018sasrec} introduced self-attention for sequential recommendation; target-aware variants such as DIN~\cite{zhou2018din} and DIEN~\cite{zhou2019dien} activate the most relevant historical behaviors; and multi-interest architectures~\cite{li2019mind, xiao2020dmin} represent users as mixtures of latent interest vectors. Scaling laws for sequential models~\cite{zhang2023scalinglawlargesequential, zhang2024wukongscalinglawlargescale} establish capacity--performance relationships that motivate deeper architectures. Most relevant to our work, LLaTTE~\cite{xiong2026latte} applies stackable transformer layers for sequence modeling with predictable power-law scaling in depth and sequence length. We adopt LLaTTE as the encoder for the dense click stream (ClickAUE) and apply interleaved cross-/self-attention for the sparse conversion stream (ConvAUE), matching the architecture to each stream's statistical properties.
  
  \textbf{Cross-Domain and Multi-Scenario Recommendation}
  Cross-domain recommendation integrates user signals across multiple apps or surfaces. Representative approaches include shared-plus-specific topologies (STAR~\cite{sheng2021star}), personalized parameter generation (PEPNet~\cite{chang2023pepnetparameterembeddingpersonalized}), adaptive domain-specific scaling (ADS~\cite{chai2025ads}), and contrastive alignment across heterogeneous scenarios~\cite{xie2022contrastivecrossdomainrecommendationmatching, Zitao_2023RecSys}. These methods transfer knowledge via shared encoders or representation alignment within a single model. Our framework differs by transferring knowledge through pre-trained embeddings: both ClickAUE and ConvAUE consume cross-application event-based features spanning ads interactions, organic engagement, and content-derived signals, but the resulting embeddings are composed downstream rather than aligned during training.
   
  \textbf{Pre-training and Asynchronous Serving for Ads}
  A persistent challenge in industrial recommendation is that high-capacity models improve quality but cannot satisfy online latency budgets. Two-tower architectures~\cite{covington2016youtube} and model compression~\cite{deng2021deeplightdeeplightweightfeature} address this at the model level. Self-supervised and contrastive pre-training offer an alternative: pre-training on unlabeled impressions~\cite{Shtoff_2023}, contrastive views of user--ad interactions~\cite{ouyang2023contrastive}, and masked multi-domain objectives~\cite{ouyang2024masked} learn transferable representations before task-specific fine-tuning. On the infrastructure side, multi-faceted large embedding tables~\cite{su2025multifacetedlargeembeddingtables} and the COFFEE codesign framework~\cite{roychowdhury2026coffeecodesignframeworkfeature} scale embedding representations for ranking.
  
  Most directly relevant to our work, asynchronous pre-training paradigms~\cite{tang2024afl} decouple representation learning from online inference by pre-training upstream user embeddings that are consumed as static features by lightweight rankers. SUM~\cite{zhang2024sum} demonstrates modular       upstream--downstream user modeling serving hundreds of downstream models, and LONGER~\cite{chai2025longer} extends this to long-horizon multi-scenario settings. Our framework builds on this paradigm but introduces two main departures: domain-aware data routing that partitions behavioral data into statistically coherent streams, and stream-specific encoder architectures that produce complementary rather than monolithic user representations.

    \begin{figure*}
      \centering
      \includegraphics[scale=0.35]{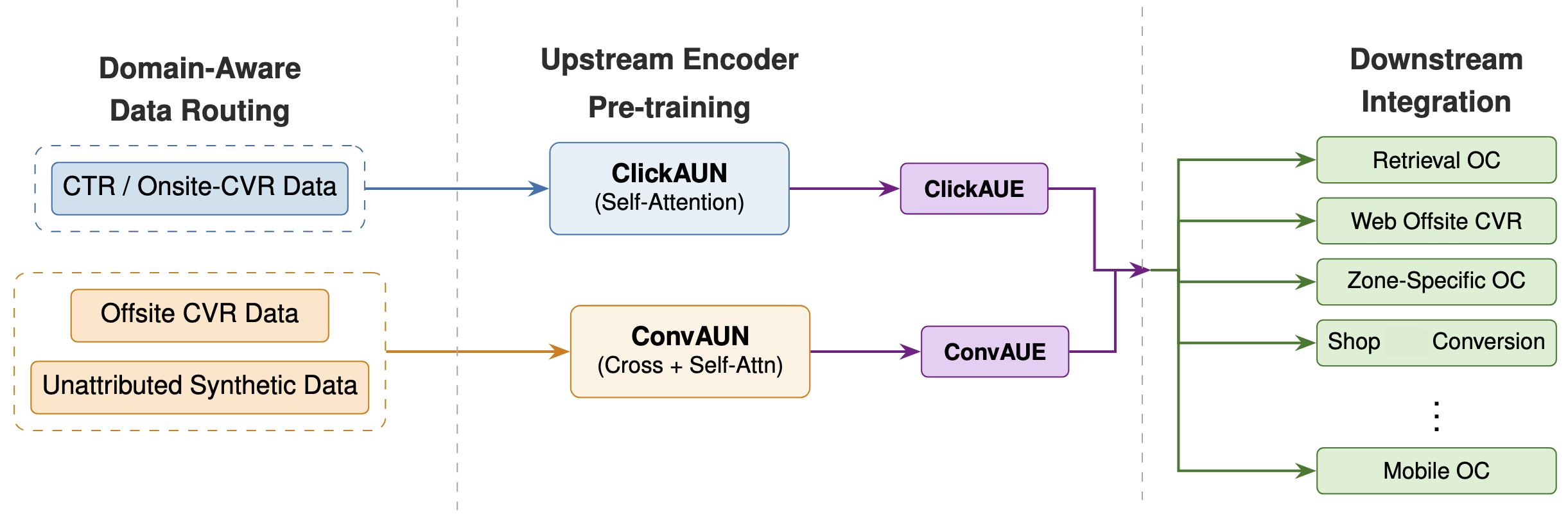}
      \caption{DUET unified framework: data routing, representation pre-training, and downstream ranker.}
      \label{fig:three_phases}
  \end{figure*}
  \section{DUET Framework}
  \label{sec:method}                                                                             
  
  Ranking models aim to predict the probability of click or conversion given a user-item pair. DUET decomposes this problem into three stages: partitioning training data into domain-coherent streams (Section~\ref{sec:routing}), pre-training a dedicated upstream encoder on each stream (Section~\ref{sec:encoder}), and integrating the resulting embeddings into the downstream ranker (Section~\ref{sec:downstream}).
  
  \subsection{Domain-Aware Data Routing}
  \label{sec:routing}                                
  We partition pointwise training data into two streams based on label semantics and attribution horizon.                         

  \paragraph{CTR / Onsite-CVR Stream.}
  This stream contains click and onsite conversion events, where the labeled action typically occurs within a short amount of time from the recommended item interaction. On this data stream, negative samples are downsampled and all positives are retained.  
  
  \paragraph{OCVR Stream.}               
  This stream contains user actions on external websites or third party apps, with a longer attribution window. We don't perform downsampling on OCVR training samples in order to preserve training data volume. 
  To mitigate the data sparsity issue in OCVR, the stream incorporates \emph{Synthetic data from unattributed conversion} --- conversion events that cannot be deterministically linked to a specific item exposure due to cross-device journeys, cookie restrictions, or delayed attribution. Synthetic data is generated by inferring the best possible ranking results associated with an unattributed conversion. Although individually noisy, these events carry aggregate user-level intent signals and increase the effective training volume.

  The statistical contrast between the two streams --- dense, short-horizon click data versus sparse, long-horizon conversion data --- motivates both the data separation and the architectural choices described next.

  \begin{figure*}
    \centering                                              
    \includegraphics[scale=0.32]{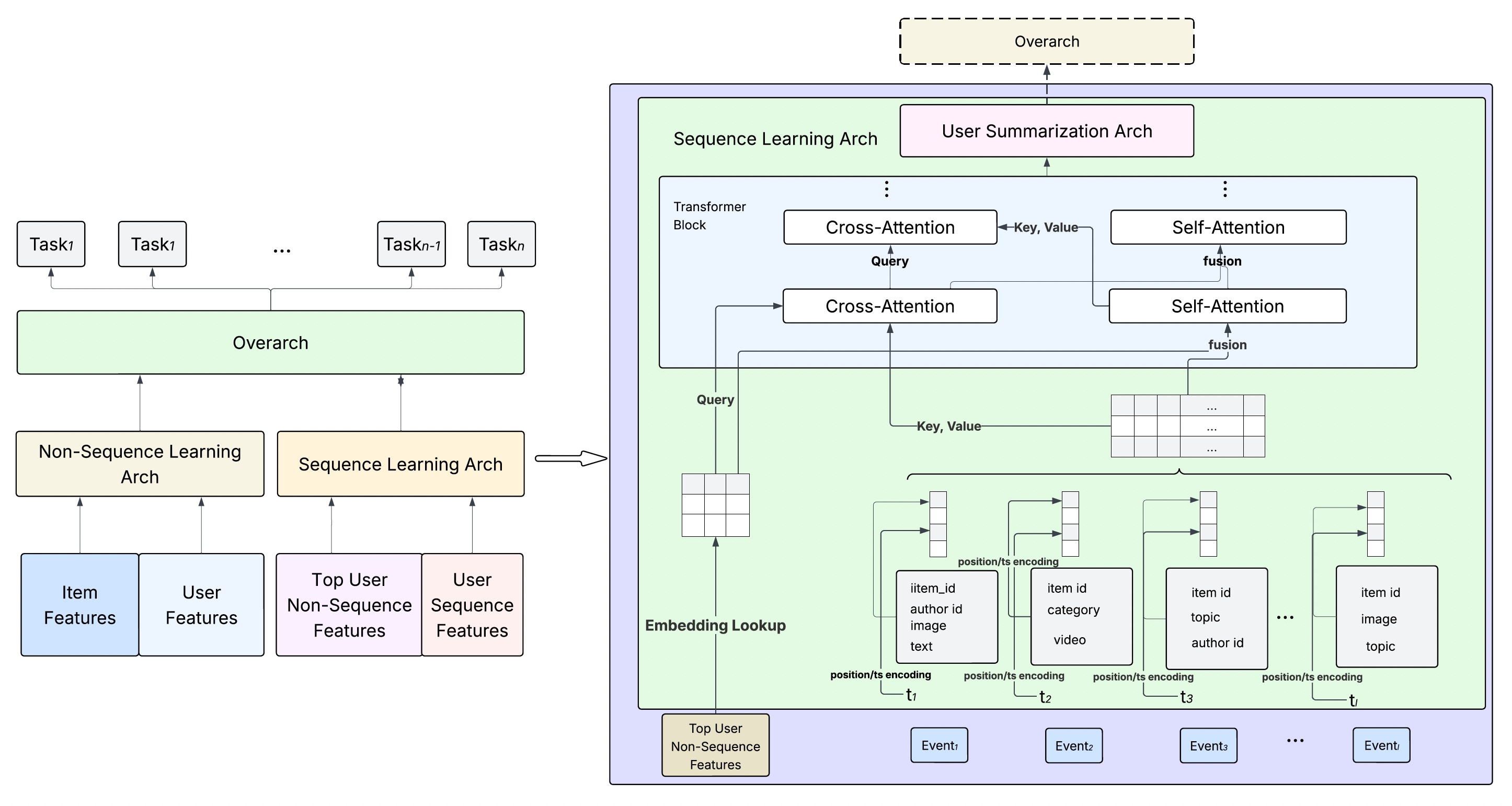}
    \caption{DUET architecture. Left: the DLRM backbone with parallel sequence and non-sequence branches. Right: event tower structure with stacked attention blocks and DCN user summarization.}
    \label{fig:sequence_arch}       
  \end{figure*} 
  
  \subsection{Upstream Encoder Design}
  \label{sec:encoder}     
  We instantiate two upstream encoders from a shared backbone architecture, each trained on one data stream. The encoders differ in attention configuration and output dimensionality but share the same input representation and backbone structure.                        

  \subsubsection{Input Representation}                      
  \label{sec:features}     
  Each encoder consumes user-side and target item-side features. Target item features are processed through the non-sequence branch of a DLRM backbone. User-side features consist of \emph{event-based feature (EBF) sequences} (~\cite{roychowdhury2026coffeecodesignframeworkfeature}) constructed from user engagement across recommended items (impressions, onsite conversions), organic feed (views, likes), and video content over a lookback window of several months, with each event represented by:  
  \begin{enumerate}
      \item \textbf{Timestamp}: a dense encoding of event occurrence time.
      \item \textbf{ID-based attributes}: entity-level features including item ID, author ID, media type, and position.
      \item \textbf{Semantic IDs}: compact discrete codes derived from entity content (image, text, video) via KNN~\cite{roychowdhury2026coffeecodesignframeworkfeature} or RQ VAE~\cite{ramasamy2025sidesemanticidembedding}, capturing content semantics beyond ID-based representations.                                                                                     
  \end{enumerate}                                                                                          
  
  \subsubsection{Backbone Architecture}
  \label{sec:backbone}     
  Both encoders follow a DLRM structure (Figure~\ref{fig:sequence_arch}, left) with two parallel branches: a \emph{sequence branch} that processes user EBF sequences and top user static features (selected by feature importance on the downstream task), and a \emph{non-sequence branch} that consumes ranking item-side and user non-sequence features. Branch outputs are fused in an overarch interaction layer~\cite{zhang2022dhendeephierarchicalensemble} for joint prediction during pre-training.

  Within the sequence branch, each event type is processed by a dedicated \emph{event tower} composed of transformer blocks (detailed below). The outputs of all event towers are concatenated and passed through a Deep \& Cross Network (DCN)~\cite{wang2017deepcrossnetwork, Wang_2021} user summarization module, which produces the final user embedding.

  We instantiate two models from this backbone:                               
  \begin{itemize}
      \item \textbf{ClickAUN} (Click-Attentive Upstream Network): trained on the CTR/onsite-CVR stream. Produces \emph{ClickAUE} user embeddings. 
      \item \textbf{ConvAUN} (Conversion-Attentive Upstream Network): trained on the OCVR stream. Produces \emph{ConvAUE} user embeddings.                   
  \end{itemize}                                             

  \subsubsection{Attention Configuration}                   
  \label{sec:architecture}
  Each event tower applies transformer blocks over sequence embeddings $X_{\text{sq}}$ and user static feature embeddings $X_{\text{st}}$. 
  We define two block types:     
  \paragraph{Self-Attention Block.}                         
  Static and sequence embeddings are concatenated as 
  \begin{equation}
    X = \mathrm{concat}(X_{\text{st}}, X_{\text{sq}}),
  \end{equation}
  which is jointly processed as:
  \begin{equation}         
    Y_{\text{self}} = X + \mathrm{attention}(Q{=}\mathrm{LayerNorm}(X),
    K{=}\mathrm{LayerNorm}(X), V{=}\mathrm{LayerNorm}(X))
    \label{eq:self_attn_1},
  \end{equation}
  where $Q$, $K$, and $V$ are the query, key, and value as inputs of $attention$, respectively. This allows the model to capture complex interactions between static and sequence feature types. The output of the self-attention block is      
  \begin{equation} 
    \mathrm{SelfAttn}(X_{\text{st}}, X_{\text{sq}}) = Y_{\text{self}} + \mathrm{FFN}(\mathrm{LayerNorm}(Y_{\text{self}}))         
    \label{eq:self_attn_2}.                       
  \end{equation}       
  \paragraph{Cross-Attention Block.}                        
  Static embeddings query against sequence embeddings as:
  \begin{equation}
    Y_{\text{cross}} = X_{\text{st}} + \mathrm{attention}(Q{=}\mathrm{LayerNorm}(X_{\text{st}}),\\ K{=}\mathrm{LayerNorm}(X_{\text{sq}}),
  V{=}\mathrm{LayerNorm}(X_{\text{sq}}))                                                                          
    \label{eq:cross_attn_1}.                                 
  \end{equation}
  This enables the model to contextualize sequential user behaviors using high-importance static features. The output of the cross-attention block is
  \begin{equation}
    \mathrm{CrossAttn}(X_{\text{st}}, X_{\text{sq}}) = Y_{\text{cross}} + \mathrm{FFN}(\mathrm{LayerNorm}(Y_{\text{cross}}))
    \label{eq:cross_attn_2}.
  \end{equation}     
  \paragraph{Stream-Specific Configuration.}                                                               
  ClickAUN stacks multiple self-attention layers following the LLaTTE paradigm~\cite{xiong2026latte}: dense supervision and short attribution windows make deep self-attention effective for capturing high-order interaction patterns. ConvAUN interleaves cross-attention and self-attention blocks. Under sparse positive labels, pure self-attention over long sequences risks overfitting to uninformative majority-negative patterns; cross-attention against stable user-level attributes anchors sequence representations, providing implicit regularization. This design also reduces computational cost, as the cross-attention query length is bounded by the number of static features rather than the full sequence length. We validate these configuration choices through ablations in Section~\ref{sec:ablation}.

  \subsubsection{Training Objectives}                       
  \label{sec:objectives}  
  Let $\ell_{\mathrm{BCE}}(\hat{y},\, y) = -\bigl[y \log \hat{y} + (1 - y)\log(1 - \hat{y})\bigr]$ denote the binary cross-entropy (BCE) loss for a predicted probability $\hat{y} \in (0,1)$ and binary label $y \in \{0,1\}$. Both upstream models are trained with multitask objectives that combine task-specific BCE losses.         
  ClickAUN is optimized on the CTR/onsite-CVR stream:       
  \begin{equation}                                          
    \mathcal{L}_{\mathrm{ClickAUN}}                             
      = \alpha_{1}\,\ell_{\mathrm{BCE}}(\hat{y}_{\mathrm{ctr}},\, y_{\mathrm{ctr}})
      + \alpha_{2}\,\ell_{\mathrm{BCE}}(\hat{y}_{\mathrm{onsite}},\, y_{\mathrm{onsite}}),
    \label{eq:loss_clickaun}
  \end{equation}                                            
  where $\hat{y}_{\mathrm{ctr}}$ and $\hat{y}_{\mathrm{onsite}}$ are the predicted click-through and onsite conversion probabilities, $y_{\mathrm{ctr}}, y_{\mathrm{onsite}} \in \{0,1\}$ are the corresponding labels, and $\alpha_{1}, \alpha_{2} > 0$ are task weights. The CTR task provides dense supervisory signal; the onsite       
  conversion task provides sparser but more intent-indicative supervision.
  ConvAUN is optimized on the offsite conversion stream:    
  \begin{equation}                                          
    \mathcal{L}_{\mathrm{ConvAUN}}
      = \beta_{1}\,\ell_{\mathrm{BCE}}(\hat{y}_{\mathrm{off}},\, y_{\mathrm{off}})
      + \beta_{2}\,\ell_{\mathrm{BCE}}(\hat{y}_{\mathrm{unattr}},\, y_{\mathrm{unattr}}).
    \label{eq:loss_convaun}                                     
  \end{equation}                                            
  where $\hat{y}_{\mathrm{off}}$ and $\hat{y}_{\mathrm{unattr}}$ are the predicted probabilities for attributed and unattributed offsite conversions, with labels $y_{\mathrm{off}}, y_{\mathrm{unattr}} \in \{0,1\}$ and task weights $\beta_{1}, \beta_{2} > 0$. All task weights are tuned on held-out validation data. 
  \subsection{Downstream Integration}                       
  \label{sec:downstream}                                    
  Let $\mathbf{e}_{\mathrm{click}}$ and $\mathbf{e}_{\mathrm{conv}}$ denote the ClickAUE and ConvAUE embeddings for a given user. The downstream ranker consumes these as additional input features alongside its standard feature vector~$\mathbf{x}$, with no other            
  architectural modifications. Both embeddings are frozen during downstream training---no gradients propagate back through $\mathbf{e}_{\mathrm{click}}$ or $\mathbf{e}_{\mathrm{conv}}$---enabling upstream and downstream models to retrain on independent schedules.  
  The downstream model is trained with a multitask objective:
  \begin{equation}
    \mathcal{L}_{\mathrm{down}}                                 
      = \lambda_{1}\,\ell_{\mathrm{BCE}}(\hat{y}_{\mathrm{cvr}},\, y_{\mathrm{cvr}})
      + \sum_{k=1}^{K} \lambda_{k+1}\,                              
        \ell_{\mathrm{BCE}}\!\bigl(\hat{y}_{\mathrm{aux}}^{(k)},\, y_{\mathrm{aux}}^{(k)}\bigr),
    \label{eq:loss_downstream}                                  
  \end{equation}                                            
  where the primary term is the offsite conversion loss, $\bigl\{\hat{y}_{\mathrm{aux}}^{(k)},\, y_{\mathrm{aux}}^{(k)}\bigr\}_{k=1}^{K}$ are predictions and labels for $K$~auxiliary tasks (e.g., value prediction, engagement prediction) that provide additional gradient signal for regularization, and $\{\lambda_{i}\}_{i=1}^{K+1}$  
  are the corresponding task weights. 
  
  
  \paragraph{Serving.}
  Embeddings are generated asynchronously via the Event-Triggered Inference (ETI) system (Section~\ref{sec:eti}), stored in a feature store, and retrieved at serving time within the latency budget.    
 \section{System Architecture}                               
  \label{sec:production} 

  \begin{figure*}[t]
    \centering
    \includegraphics[scale=0.35]{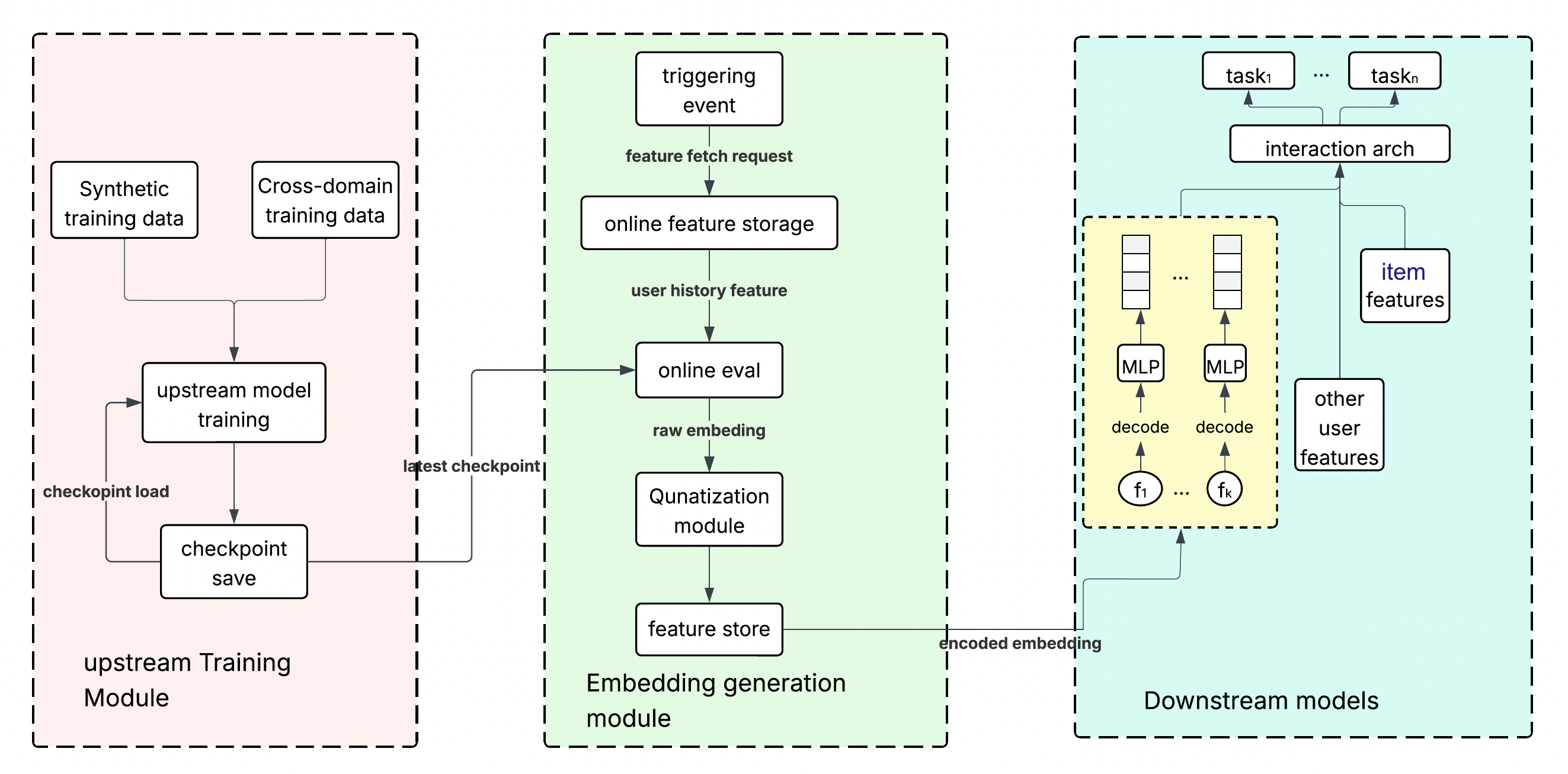}
    \caption{End-to-end system: recurring pre-training with checkpoint validation followed by SIDE-based embedding compression and finally event-triggered inference and serving.}
    \label{fig:serving_pipeline}                  
  \end{figure*}
  

  \subsection{Event-Triggered Inference}
  \label{sec:eti}                                                                                                               
  Conventional embedding pipelines couple embedding generation to training data ingestion, producing updates at the cadence of the training loop. This is problematic for DUET: the CTR/OCVR stream generates high data volume, while the OC stream is too sparse-- neither cadence yields a desirable freshness vs infra-load tradeoff.                                        
  
  We instead decouple embedding generation from training via an \emph{event-triggered inference} (ETI) architecture (as illustrated in Figure~\ref{fig:serving_pipeline}). While the model is doing pre-training on hourly/daily basis, a dedicated serving model, loading from the most recently validated pre-trained checkpoint, generates embeddings on demand: when a user performs a qualifying action (e.g., an onsite conversion), the system fetches the user's latest EBF and static features from online storage and runs a forward pass to produce an updated embedding.
  Onsite conversions provide a natural trigger --- their volume is moderate (higher than post-click events, lower than impressions) and correlates with user value, ensuring that high-activity users receive fresher representations. 

  \paragraph{Independent Pipeline Operation}               
  ClickAUN and ConvAUN pre-training 
  iterate at a cadence matched to its data volume --- ClickAUN retrains more frequently; ConvAUN is paced by sparser conversion data. Embedding staleness and serving health are monitored per-pipeline via automated alerts on NE degradation and latency.

  \paragraph{Checkpoint Validation}
  Training and inference are operationally separated: if a training run produces a degraded checkpoint (e.g., due to gradient instability), the serving model rejects any checkpoint whose validation NE exceeds a predefined threshold relative to its predecessor, maintaining embedding quality without manual intervention.

  \paragraph{Efficiency Optimizations}
  At inference time, only the sequence learning component is extracted from the training checkpoint, reducing model size and the number of serving hosts. A fixed throttling window suppresses redundant updates for users with frequent triggering events.

  \subsection{Embedding Compression}
  \label{sec:quantization}
  Raw embeddings from ETI are compressed by a standalone quantization module before storage (Figure~\ref{fig:serving_pipeline}, middle). We apply the Semantic ID Embedding (SIDE) technique~\cite{ramasamy2025sidesemanticidembedding}, which maps continuous embedding vectors to discrete codebook entries via vector quantization and fuses multiple quantized vectors into compact semantic ID representations. The quantization module is pre-trained offline on raw embeddings. SIDE achieves $4\times$ compression relative to FP16 storage with negligible downstream NE impact.

  \subsection{Serving-Time Decoding}
  \label{sec:serving}
  At serving time, quantized embeddings are fetched from the feature store and decoded via a rule-based decoder from SIDE into float-valued vectors. A learnable MLP, trained jointly with the downstream model, projects the decoded embeddings to match the dimensionality of other input features before they enter the overarch interaction layer. This design adds no architectural modifications to the downstream ranker beyond the additional embedding inputs.
\section{Experiments}
\label{sec:experiments}

We evaluate DUET through retrospective experiments, ablation studies, and A/B tests. Section~\ref{sec:metrics}--\ref{sec:impl_details} describe the evaluation metric, compared configurations, and implementation details. Section~\ref{sec:embedding_analysis} analyzes the learned embeddings. Sections~\ref{sec:main_results} and~\ref{sec:ablation} present the main results and component-level ablations. Finally section~\ref{sec:online} reports online results.

\subsection{Evaluation Metric}
\label{sec:metrics}

We use normalized entropy (NE) as the primary evaluation metric, defined as the average log loss normalized by the entropy of a na\"ive model predicting the empirical positive rate $p$:
\begin{equation}
  \text{NE} = \frac{-\frac{1}{N}\sum_{i=1}^{N}\big[y_i\log(\hat{p}_i) + (1{-}y_i)\log(1{-}\hat{p}_i)\big]}{-\big[p\log(p) + (1{-}p)\log(1{-}p)\big]}.
  \label{eq:ne}
\end{equation}
NE is preferred over raw log-loss because it normalizes for class imbalance, enabling meaningful comparison across tasks and datasets with different positive rates.
We report relative NE reduction (\%$\Delta$NE) against the baseline, where higher reduction is better. NE is used consistently across both upstream pre-training and downstream evaluation, enabling end-to-end performance tracking.

\subsection{Compared Configurations}
\label{sec:baselines}

  We evaluate four configurations, all sharing the same downstream DLRM architecture and input feature set: 
  \begin{enumerate}
      \item \textbf{Baseline}: the existing downstream ranker without pre-trained embeddings.
      \item \textbf{ClickAUE Only}: downstream ranker augmented with ClickAUE.
      \item \textbf{ConvAUE Only}: downstream ranker augmented with ConvAUE.
      \item \textbf{DUET}: downstream ranker augmented with both ClickAUE and ConvAUE.
  \end{enumerate}
  Contrastive and masked pre-training methods~\cite{ouyang2023contrastive, ouyang2024masked} are excluded as their augmentation-based objectives differ from our pointwise BCE formulation. The single-embedding configurations serve as ablations isolating each stream's marginal contribution.     


\subsection{Implementation Details}
\label{sec:impl_details}

Both ClickAUN and ConvAUN are trained on $128$ NVIDIA H100 GPUs, achieving approximately $\approx$200K offline training QPS—sufficient throughput to enable continuous training and timely deployment of updated model checkpoints under strict latency constraints. 

ClickAUN employs a multi-task architecture with seven task-specific towers, including CTR, onsite conversion, and video view prediction. Each task head's contribution is modulated via a gradient scaling hyperparameter, calibrated to balance the supervisory signal strength against task-specific label noise. The sequence learning architecture of ClickAUN comprises seven event towers, each corresponding to a distinct user-side event feature, with a maximum sequence length of $1,000$. The self-attention module consists of $n=2$ stacked layers, each with $h=2$ attention heads, model dimension $d_{model}=256$, and feed-forward dimension $d_{ff}=1024$. The output ClickAUE are $16$ $80$-dimension embeddings.

ConvAUN comprises seven task specific towers, including conversion optimized, link click, button click and synthetic conversion task. The sequence learning architecture adopts horizontal scale paradigm comprising $13$ event towers, where maximum event length not exceed $200$. The attention module consists of $n=1$ layer with $h=8$ attention heads, model dimension $d_{model}=128$, and feed-forward dimension $d_{ff}=256$. The output dimension of ConvAUE is $5\times 80$.
We use the existing downstream models as baselines, maintaining identical architecture and training data; only the input feature set is modified to include ConvAUE and ClickAUE as additional embedding features.

\begin{figure*}[ht] 
    \centering
    \begin{minipage}[t]{0.48\textwidth} 
        \centering
        \includegraphics[width=\linewidth]{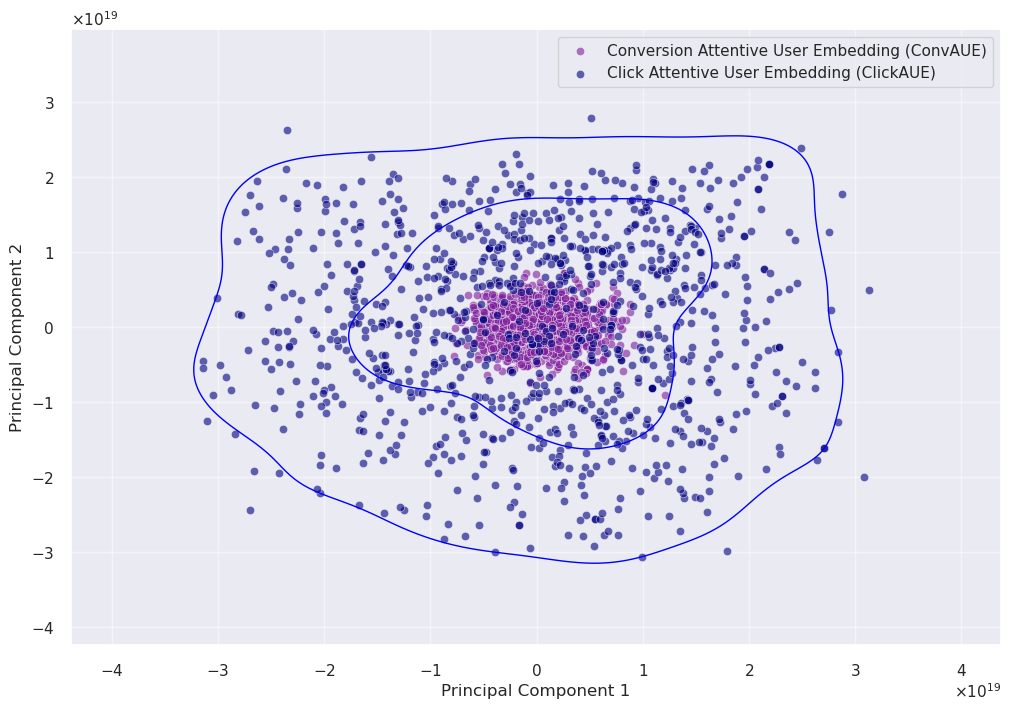} 
        \caption{Principal components of quantized user embeddings. ConvAUE and ClickAUE occupy distinct regions, indicating non-redundant information.}
        \label{fig:sub_a}
    \end{minipage}%
    \hfill 
    \begin{minipage}[t]{0.48\textwidth} 
        \centering
        \includegraphics[width=\linewidth]{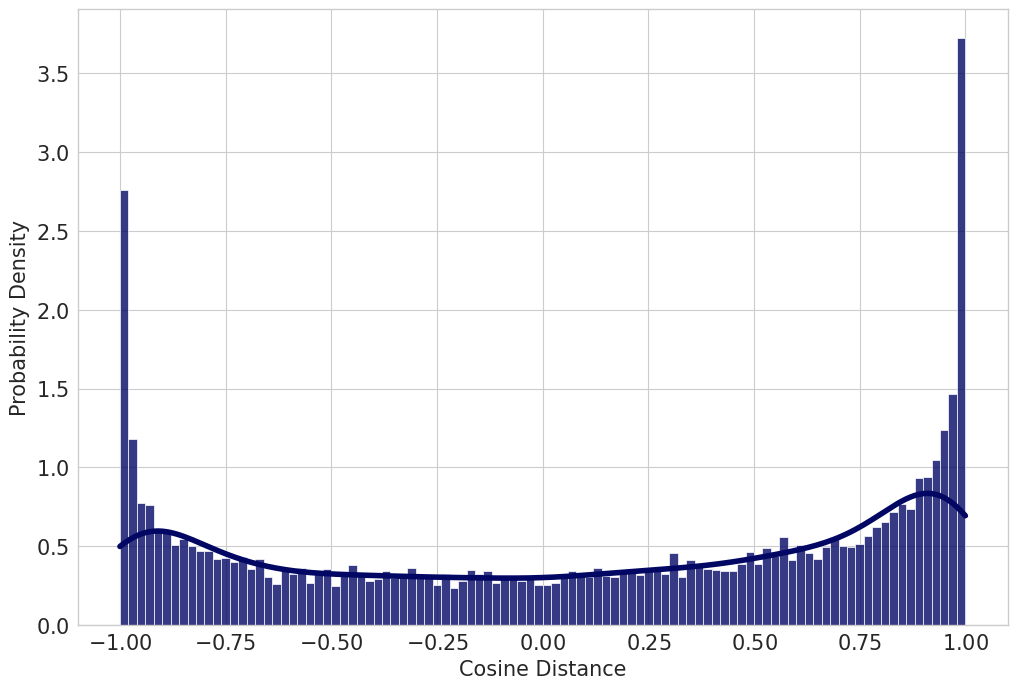}
        \caption{Orthogonality between ConvAUE and ClickAUE. Degree of complementarity varies, higher orthogonality indicating greater potential for additive information gain.}
        \label{fig:sub_b}
    \end{minipage}
    \label{fig:combined_figure}
\end{figure*}

\subsection{Embedding Analysis}        
  \label{sec:embedding_analysis}                                                                              
  We examine whether ClickAUE and ConvAUE encode redundant or complementary information through two analyses. 
  \paragraph{Principal component visualization.}                                                              
  Figure~\ref{fig:sub_a} projects quantized ClickAUE and ConvAUE vectors onto their first two principal components. The two embedding types occupy largely disjoint regions, indicating that they capture distinct aspects of user behavior rather than redundant representations of the same signal.
  \paragraph{Cosine distance distribution.}                 
  Figure~\ref{fig:sub_b} plots the probability density of pairwise cosine distances between ClickAUE and ConvAUE for the same users. The distribution is approximately uniform over $[-1, +1]$ with mild concentrations at both extremes. A uniform distribution implies that the two embedding spaces are approximately orthogonal in aggregate --- neither systematically aligned nor systematically opposed. The slight peaks near $-1$ and $+1$ suggest that a small subset of users exhibit strongly correlated or anti-correlated click and conversion patterns, while the majority of users are represented by embeddings that carry independent information. This near-orthogonality is consistent with the additive NE gains observed when combining both embeddings (Section~\ref{sec:main_results}): the two representations
  contribute largely non-overlapping predictive signal to the downstream ranker.

\subsection{Main Results}
\label{sec:main_results}

  Table~\ref{tab:prodranker_ne} reports the relative training NE reduction on the primary OCVR task. ConvAUE alone yields a 0.3\% gain, ClickAUE alone yields 0.21\%, and DUET (both embeddings combined) achieves 0.38\%. The theoretical upper bound --- computed under the assumption that ClickAUE and ConvAUE are perfectly orthogonal and their gains are fully additive --- is 0.51\%. DUET recovers approximately 75\% of this upper bound (0.38/0.51) with at least $\approx13\%$ relative lift compared to standalone embeddings, indicating complementarity between the two embeddings while suggesting partial information overlap, consistent with the cosine distance analysis in Section~\ref{sec:embedding_analysis}. Notably, neither single-stream embedding alone approaches the combined gain, confirming that the click and conversion streams encode distinct predictive signals that are jointly more informative than either in isolation.

\begin{table}
  \centering
  \caption{Training NE gain of downstream rankers, DUET outperforms and approaches theoretical upperbound.}
  \label{tab:prodranker_ne}
  \begin{tabular}{l c}
    \toprule
    \multicolumn{1}{c}{Downstream Ranker} & \%$\Delta$NE Gain\\
    \hline
    With ClickAUE & 0.21\% \\
    With ConvAUE & 0.30\% \\
    With DUET & 0.38\% \\
    \hline
    Upperbound: ConvAUE $\perp$ ClickAUE & 0.51\% \\
    \bottomrule
  \end{tabular}
\end{table}




    \begin{figure}[htbp] 
    \centering 
    \includegraphics[scale=0.65]{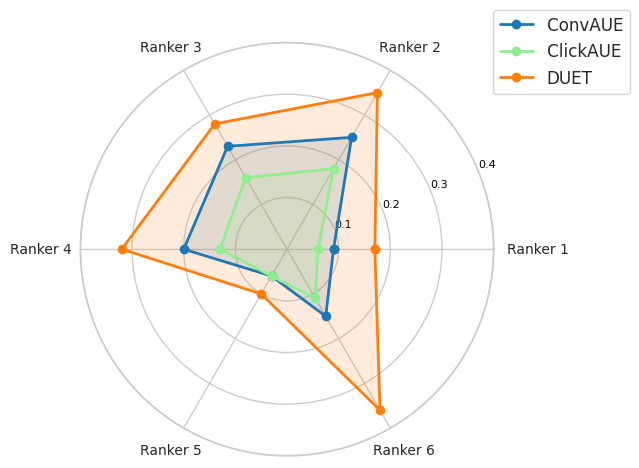} 
    \centering 
    \caption{Eval NE gains (\%) on Different Types and Stages of downstream rankers. Higher is better.} 
    \label{fig:prod_ne_radar} 
    \end{figure}
    
  Figure~\ref{fig:prod_ne_radar} reports evaluation NE gains across six downstream offsite CVR models. These are distinct OCVR ranking models, spanning different ranking stages (e.g. final-stage feed vs. early-stage explore) and optimization objectives. They differ in traffic volume, attribution characteristics, and baseline feature sets. We omit further architectural and operational details as they are not important for the evaluation; the important property is that all six share the same DLRM backbone and differ only in the input feature set when augmented with DUET embeddings.

  Three observations emerge. First, DUET consistently outperforms ConvAUE/ClickAUE across all six models, with additive lifts from ClickAUE ranging from +0.04\% (Ranker 5) to +0.21\% (Ranker 6). This confirms that click-derived representations provide complementary signal to conversion-derived representations regardless of the downstream model configuration. Second, the magnitude of both ConvAUE/ClickAUE and DUET gains varies across models --- Ranker 2 and Ranker 6 benefit most (0.35\% and 0.36\% respectively), while Ranker 5 shows more modest improvement (0.10\%). This variation reflects differences in data volume, attribution rates, and the degree to which each model's existing feature set already captures click- and conversion-related patterns. Third, the relative contribution of ClickAUE (the gap between DUET and ConvAUE) is not constant: Ranker 6 shows the largest additive lift (+0.21\%), suggesting that its baseline feature set has the most room for enrichment from click-stream representations, whereas Ranker 3 and Ranker 5 show smaller incremental gains, indicating that their existing features already capture some of the information encoded in ClickAUE.
  
\subsection{Ablation Study}
\label{sec:ablation}
We perform an ablation study on how individual components—training data, model architecture complexity, and event features—affect pre-training (PT) and downstream ranker (DR) performance. Table~\ref{tab:ablation} reports PT and DR NE degradation when each component is removed from its upstream encoder; larger values indicate greater importance.

ConvAUE ablations. The user journey event feature is the most impactful single component across both encoders, with removal incurring a 0.30\% PT NE loss and 0.1\% DR NE loss. This feature captures cross-site browsing patterns that provide direct evidence of conversion intent. Unattributed synthetic data contributes 0.13\% PT NE and 0.03\% DR NE, confirming that conversion events lacking deterministic attribution nonetheless provide useful weak supervision for the sparse OCVR stream. Architectural depth (2-layer vs.\ 1-layer attention) accounts for 0.06\% PT and 0.025\% DR, a modest but consistent gain that validates the interleaved cross-/self-attention design under sparse supervision.

ClickAUE ablations. Scaling the sequence and non-sequence architecture yields up to 0.13\% PT NE and 0.05\% DR NE gain, indicating that model capacity in the dense-feature branch is load-bearing for ClickAUE quality. Among individual event features, page engagement is the most impactful (0.07\% PT and 0.03\% DR), followed by target item impression events (0.04\% PT and 0.015\% DR).

Cross-encoder comparison. ConvAUE is dominated by a single high-signal source (user journey at 0.30\%), whereas ClickAUE draws gains uniformly with no event feature exceeding 0.07\%. This reflects the data regimes: the sparse conversion stream depends on few high-signal events, while the dense click stream benefits from breadth across many modest sources.

PT-to-DR transfer ratio. Comparing PT and DR NE changes reveals that sequence architecture modifications and event feature additions exhibit transfer ratios of 30\%–40\%, as these changes directly affect the sequence learning component used in embedding generation. Data-level changes such as synthetic data show a lower but still meaningful transfer ratio (23\%), confirming that upstream data improvements propagate to downstream rankers through the learned embedding representations.

\begin{table}[t]
  \centering
  \caption{Ablation study on the upstream encoders.}
  \label{tab:ablation}
  \begin{tabular}{l c c}
    \toprule
    \multicolumn{1}{c}{\textbf{ConvAUE}} & PT \%$\Delta$NE & DR \%$\Delta$NE \\
    \hline
    Remove Synthetic Data & 0.13\% & 0.03\%\\
    Reduce 1 Layer Attention & 0.06\% & 0.025\%\\ 
    Remove User Journey Event  & 0.3\% &0.1\% \\ 
    Remove Item Impression Event & 0.04\% & 0.015\%\\
    \midrule
    \multicolumn{1}{c}{\textbf{ClickAUE}} & & \\
    \hline
    Scaling Sequence/Non-sequence Arch &  0.13\% & 0.05\%\\
    Ablate Item Impression Event Feature & 0.04\% & 0.01\%\\
    Ablate Page Event Feature & 0.07\% & 0.03\%\\
    \bottomrule
  \end{tabular}
\end{table}

\subsection{A/B Test Results}
\label{sec:online}

\begin{table}
  \centering
  \caption{A/B test results with relative lifts vs.\ the baseline.}
  \label{tab:online}
  \begin{tabular}{l ccc}
    \toprule
    Downstream Ranker & Offsite CVR Lift \\
    \midrule
    Ranker 3 & +0.66\% \\
    Ranker 4 & +0.15\% \\
    \bottomrule
  \end{tabular}
\end{table}


  We validate DUET's impact through A/B tests on two CVR models (Table~\ref{tab:online}). Experimentation in recommendation systems is resource-intensive and this constrains the number of models that can be evaluated concurrently. 

  Both models show statistically significant improvements. Ranker 3 achieves a +0.66\% CVR lift; Ranker 4 achieves a +0.15\% CVR lift. Therefore, both models deliver positive lifts, confirming that the offline NE gains (Section~\ref{sec:main_results}) translate to measurable A/B test impact. All results are statistically significant at $p < 0.05$ (two-sided $t$-test). Latency overhead is negligible, as embeddings are pre-computed via the ETI system (Section~\ref{sec:eti}) and served through feature store lookup with no additional model inference at serving time.  
  

\section{Conclusion}
\label{sec:conclusion}

 We presented DUET, a framework that partitions heterogeneous behavioral data into domain-coherent click and conversion streams, pre-trains dedicated upstream encoders with stream-matched attention architectures, and integrates the resulting complementary embeddings into downstream rankers via asynchronous serving. The contribution lies not in the individual components --- which draw on established techniques --- but in their principled composition for offsite CVR prediction and detailed validation. Our experiments demonstrate as much as $0.38$\% NE reduction over the strongest baseline, and A/B tests across multiple downstream models confirm consistent CVR lifts with negligible latency overhead.

  \paragraph{Limitations and Future Work.}
  The current data routing is static and rule-based; learned routing mechanisms could enable finer-grained, multi-stream partitioning. The attention configuration assignment (self-attention for clicks, cross-attention for conversions) was validated through ablation but not compared against alternative assignments. Frozen embeddings during downstream training favor operational decoupling at the potential burden of adaptation to distribution shifts --- lightweight fine-tuning strategies (e.g., adapter layers) could mitigate this tradeoff.

\bibliographystyle{ACM-Reference-Format}
\bibliography{paper}
\newpage
\section{Appendix}
  
\subsection{Upstream Encoder Details}\label{sec:appendix_upstream_encoder}

We optimize the models using Distributed Shampoo optimizer~\cite{Gupta2018Shampoo}, a second-order method that leverages Kronecker-factored preconditioning to capture pairwise gradient correlations. Compared to first-order optimizers such as Adam, AdaGrad, and SGD, Shampoo consistently yields statistically significant improvements while incurring no additional inference overhead. We configure Shampoo with learning rate $\alpha=0.04$, $\beta_1=0.9$, $\beta_2=1.0$, $\epsilon=10^{-4}$, momentum $\mu=0$, and weight decay $\lambda=10^{-5}$. We apply linear learning rate warmup over $20,000$ iterations, interpolating from the initial rate to $10^{-3}$. 

\subsection{Equivalence of Consolidated and Separate Upstream Models}
\label{sec:appendix_proof}

In this section, we argue that the impact of AFL embeddings in the downstream model is approximately the same regardless of whether they are learned by one consolidated or two separate upstream models.

\begin{assumption} \label{assum:same_arch}
The AFL upstream model matches the baseline architecture, except it includes the sequence learning component, while the baseline uses a fixed embedding (i.e., the sequence component is frozen in the baseline).
\end{assumption}

\begin{assumption} \label{assum:same_interaction}
In a consolidated model, interactions between different data sources or intermediate embeddings occur within the upstream model; with separate upstream models, their interaction is deferred to the downstream model (e.g., within the interaction layer or overarch).
\end{assumption}

\begin{assumption} \label{assum:dot_prod}
Interactions between embedding vectors are realized through dot product operations. Note that this is a simplifying assumption; in practice, DLRMs use nonlinear interaction layers (e.g., DCN cross layers, MLPs), which may introduce deviations from the equivalence derived below.
\end{assumption}

\begin{lemma}
Learning a joint user embedding with a consolidated model utilizing both click and conversion data is nearly equivalent to separately learning embeddings from each data type in the upstream and integrating them together in the downstream model.
\end{lemma}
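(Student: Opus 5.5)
The plan is to formalize the two pipelines as compositions of linear embedding maps and dot-product interactions, and then show that the downstream logit has the same functional form in both cases. The match in form is up to which parameters are free and where they are optimized, which is what "nearly equivalent" will mean.

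\textbf{Setup.} Let $\mathbf{s}_{\mathrm{click}}$ and $\mathbf{s}_{\mathrm{conv}}$ denote the sequence-component outputs computed from the click-type and conversion-type event sources. By Assumption~\ref{assum:same_arch}, both the consolidated upstream model and each separate upstream model contain this sequence component, with all other modules identical to the baseline. The consolidated model produces a joint embedding $\mathbf{e}_{\mathrm{joint}} = g(\mathbf{s}_{\mathrm{click}}, \mathbf{s}_{\mathrm{conv}})$. The separate models produce $\mathbf{e}_{\mathrm{click}} = g_1(\mathbf{s}_{\mathrm{click}})$ and $\mathbf{e}_{\mathrm{conv}} = g_2(\mathbf{s}_{\mathrm{conv}})$.

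\textbf{Step 1 (interactions as dot products).} Using Assumption~\ref{assum:dot_prod}, write the consolidated model's internal interactions as dot products. For example, $g$ contains terms $\langle W_1\mathbf{s}_{\mathrm{click}}, W_2\mathbf{s}_{\mathrm{conv}}\rangle$, and the joint embedding then enters the downstream interaction layer via $\langle \mathbf{e}_{\mathrm{joint}}, \mathbf{x}_j\rangle$ for downstream features $\mathbf{x}_j$. Expanding by bilinearity, the downstream logit becomes a sum of three kinds of terms:
\begin{itemize}
\item terms $\langle A\mathbf{s}_{\mathrm{click}}, \mathbf{x}_j\rangle$,
\item terms $\langle B\mathbf{s}_{\mathrm{conv}}, \mathbf{x}_j\rangle$,
\item cross terms $\langle C\mathbf{s}_{\mathrm{click}}, D\mathbf{s}_{\mathrm{conv}}\rangle$.
\end{itemize}

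\textbf{Step 2 (same expansion for the separate models).} In the separate setting, Assumption~\ref{assum:same_interaction} places the click--conversion interaction in the downstream interaction layer. The pairwise dot products computed there are
\begin{itemize}
\item $\langle \mathbf{e}_{\mathrm{click}}, \mathbf{x}_j\rangle$,
\item $\langle \mathbf{e}_{\mathrm{conv}}, \mathbf{x}_j\rangle$,
\item $\langle \mathbf{e}_{\mathrm{click}}, \mathbf{e}_{\mathrm{conv}}\rangle$.
\end{itemize}
These produce the same three families of bilinear terms. Matching the families term by term, with $A\leftrightarrow g_1$, $B\leftrightarrow g_2$, and the learned downstream projection (the MLP of Section~\ref{sec:serving}) absorbing $C$ and $D$, shows that the two hypothesis classes for the downstream logit coincide.

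\textbf{Step 3 (main obstacle: different training).} The difficulty is that the parameters are trained differently in the two pipelines.
\begin{itemize}
\item In the consolidated case, $A$, $B$, $C$ and $D$ are fit jointly on the mixed stream.
\item In the separate case, $g_1$ and $g_2$ are fit on their own streams and frozen, and only the cross coupling is fit downstream.
\end{itemize}
So exact equivalence does not hold. I would argue that the representable functions are identical and the induced logits differ only by the estimation of these coefficients. That gap is small when the downstream projection is expressive enough to realign each frozen embedding. It is also plausibly favorable to the separate design, since the sparse stream is no longer dominated by the dense one, which is the point made in Section~\ref{sec:intro}.

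\textbf{Caveat.} Under the nonlinear interactions that Assumption~\ref{assum:dot_prod} explicitly sets aside, the argument extends only to first order. Linearizing the DCN/MLP interaction around the operating point reduces it to the bilinear case, and the higher-order terms give the deviation that the word "nearly" in the statement allows for.
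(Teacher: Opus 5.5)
Your core mechanism matches the paper's. Under dot-product interactions, a click--conversion interaction formed downstream from separately learned embeddings has the same bilinear form as one formed inside a consolidated upstream, with the separately learned weights multiplying into an effective coefficient. The paper's comparison of $\mathbf{w}$ with $\mathbf{w}^*=\mathbf{w}'\circ\mathbf{w}''$ is the diagonal, single-interaction case of your comparison of $C^\top D$ with $G_1^\top P_1^\top P_2 G_2$ (writing $g_i(\mathbf{s})=G_i\mathbf{s}$ and $P_1,P_2$ for the downstream projections). The difference is in how the argument is closed. The paper lets the downstream act on the interaction only through a common scaling $\mathbf{z}$ (its reading of Assumption~\ref{assum:same_interaction}). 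It then simply invokes Assumption~\ref{assum:same_arch} to posit $\mathbf{w}^*\approx\mathbf{w}$. You instead claim the two hypothesis classes coincide, so that only estimation differs. In your own setup that claim fails for two concrete reasons.

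First, the expansion in Step~1 is incomplete. A cross term created inside the consolidated upstream must sit in $\mathbf{e}_{\mathrm{joint}}$ as some $\langle W_1\mathbf{s}_{\mathrm{click}},W_2\mathbf{s}_{\mathrm{conv}}\rangle\,\mathbf{c}$. Then $\langle\mathbf{e}_{\mathrm{joint}},\mathbf{x}_j\rangle$ contains $\langle\mathbf{c},\mathbf{x}_j\rangle\,\langle W_1\mathbf{s}_{\mathrm{click}},W_2\mathbf{s}_{\mathrm{conv}}\rangle$, which is cubic in $(\mathbf{s}_{\mathrm{click}},\mathbf{s}_{\mathrm{conv}},\mathbf{x}_j)$ and not of the form $\langle C\mathbf{s}_{\mathrm{click}},D\mathbf{s}_{\mathrm{conv}}\rangle$. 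The separate design, with one layer of pairwise dot products and a linear overarch, produces only bilinear terms, so it cannot represent this term.

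Second, even for the bilinear cross terms, the separate pipeline reaches only matrices $G_1^\top M G_2$ with $G_1,G_2$ frozen. This set contains $C^\top D$ only if the column space of $C^\top D$ lies in the row space of $G_1$ and its row space lies in that of $G_2$. No downstream projection, however expressive, can restore directions of $\mathbf{s}_{\mathrm{click}}$ that $G_1$ discarded because they were useless for the click objective alone. So ``expressive enough to realign'' is the wrong condition, and the discrepancy is a restriction of the hypothesis class rather than estimation noise.

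You can repair this in one of two ways. One option is to restrict to the paper's setting: only the click--conversion interaction, scaled downstream by $\mathbf{z}$. Your Steps~1--2 then reduce to the paper's computation, and the near-equality of coefficients has to come from Assumption~\ref{assum:same_arch}, as in the paper. The other option is to state explicitly that the cubic terms, and the directions annihilated by the frozen upstream maps, are part of what ``nearly'' absorbs.
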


\begin{proof}
Let $\mathbf{u} \in \mathbb{R}^n$ be ConvAUE and $\mathbf{v} \in \mathbb{R}^n$ be ClickAUE.
Let $\mathbf{w} \in \mathbb{R}^n$ be scalar weights in the consolidated upstream, $\mathbf{w'}, \mathbf{w''} \in \mathbb{R}^n$ be scalar weights in ClickAUN and ConvAUN respectively, and $\mathbf{z} \in \mathbb{R}^n$ be a scalar vector in the downstream model.
By Assumption~\ref{assum:same_interaction}, the downstream model learns only $\mathbf{z} \in \mathbb{R}^n$ regardless of whether the upstream is consolidated or separate.

Table~\ref{table:afl_embeddings} summarizes the embeddings and associated weights in both settings.

\begin{table}[h]
    \centering
    \begin{tabular}{lcc}
        \toprule
         & Upstream & Downstream \\
        \midrule
        Consolidated & $(\mathbf{u} \cdot \mathbf{v}) \mathbf{w}$  & $\big((\mathbf{u} \cdot \mathbf{v}) \mathbf{w}\big)\mathbf{z}$ \\
        Separate & $\mathbf{u} \mathbf{w'},\; \mathbf{v} \mathbf{w''}$ & $\big(\mathbf{u} \mathbf{w'}\cdot \mathbf{v} \mathbf{w''}\big)\mathbf{z}$  \\
        \bottomrule
    \end{tabular}
    \caption{Consolidated vs.\ independent user embeddings in upstream and downstream settings.}
    \label{table:afl_embeddings}
\end{table}

For the consolidated downstream representation:
\begin{align} \label{eq:consolidated_down}
    (\mathbf{u} \cdot \mathbf{v}) \mathbf{w} &= \sum_{i=1}^n w_i (u_i \cdot \mathbf{v}) \notag\\
    &= (w_1 u_1) \cdot \mathbf{v} + \cdots + (w_n u_n) \cdot \mathbf{v} && \text{(homogeneity)} \notag\\
    &= \bigg(\sum_{i=1}^n w_i u_i\bigg) \cdot \mathbf{v} && \text{(distributivity)}
\end{align}

For the separate downstream representation, using Assumption~\ref{assum:same_interaction}:
\begin{align} \label{eq:separate_down}
    \mathbf{u} \mathbf{w'} \cdot \mathbf{v} \mathbf{w''} &= (\mathbf{u} \circ \mathbf{w'}) \cdot (\mathbf{v} \circ \mathbf{w''}) && \text{(Hadamard product)} \notag\\
    &= \sum_{i=1}^n u_i w'_i v_i w''_i \notag\\
    &= \sum_{i=1}^n w^*_i u_i v_i \notag\\
    &= \bigg(\sum_{i=1}^n w^*_i u_i\bigg) \cdot \mathbf{v}
\end{align}

By Assumption~\ref{assum:same_arch}, comparing Equations~\eqref{eq:consolidated_down} and~\eqref{eq:separate_down}, we have $\mathbf{w^*} \approx \mathbf{w}$, establishing near-equivalence.
\end{proof}

\paragraph{Discussion.}
The near-equivalence result shows that separating embeddings into two upstream models does not sacrifice representational power relative to a consolidated model, under linear interaction assumptions.
However, the separate design offers important advantages beyond equivalence: it enables each upstream model to use a \emph{different} attention architecture tailored to its data domain (pure self-attention for ClickAUM vs.\ interleaved cross-attention and self-attention for ConvAUM), which a single consolidated model cannot easily accommodate.
Furthermore, separate models provide operational benefits including independent scaling, fault isolation, and the ability to retrain each pipeline on its own schedule without affecting the other.
We note that in practice, DLRMs use nonlinear interaction layers (e.g., DCN cross layers, MLPs) rather than pure dot products, which means the actual equivalence may be looser than the linear case derived above; nonetheless, our empirical results (Section~\ref{sec:main_results}) confirm that the separate design performs at least as well as single-stream pre-training while offering the architectural flexibility described above.

\end{document}